
\documentclass[nohyperref]{article}

\usepackage{microtype}
\usepackage{graphicx}
\usepackage{subfigure}
\usepackage{booktabs} 

\usepackage{hyperref}



\usepackage[accepted]{icml2023}

\usepackage{amsmath}
\usepackage{amssymb}
\usepackage{mathtools}
\usepackage{amsthm}
\usepackage{mathtools}
\newcommand{\defeq}{\vcentcolon=}
\usepackage[capitalize,noabbrev]{cleveref}
\usepackage{bbm}

\DeclareMathOperator{\Pa}{Pa}
\DeclareMathOperator{\De}{De}
\DeclareMathOperator{\Nd}{Nd}

\theoremstyle{plain}
\newtheorem{theorem}{Theorem}[section]
\newtheorem{proposition}[theorem]{Proposition}
\newtheorem{lemma}[theorem]{Lemma}
\newtheorem{corollary}[theorem]{Corollary}
\theoremstyle{definition}
\newtheorem{definition}[theorem]{Definition}

\theoremstyle{remark}

\theoremstyle{definition}
\newtheorem{example}{Example}[section]
	
\theoremstyle{plain}
    
    \providecommand{\customgenericname}{}
    \newcommand{\newcustomtheorem}[2]{%
      \newenvironment{#1}[1]
      {%
       \renewcommand\customgenericname{#2}%
       \renewcommand\theinnercustomgeneric{##1}%
       \innercustomgeneric
      }
      {\endinnercustomgeneric}
    }
    
    \newcustomtheorem{customthm}{Theorem}
    \newcustomtheorem{customlemma}{Lemma}
    \newcustomtheorem{customprop}{Proposition}
    \newcustomtheorem{customcorollary}{Corollary}

\def\EE{{\mathbb E}}    
\def\11{{\mathbf 1}}    









\DeclareSymbolFont{boldoperators}{OT1}{cmr}{bx}{n}
\SetSymbolFont{boldoperators}{bold}{OT1}{cmr}{bx}{n}

\usepackage[textsize=tiny]{todonotes}

\icmltitlerunning{Results on Counterfactual Invariance}

\begin{document}

\twocolumn[
\icmltitle{Results on Counterfactual Invariance}



\icmlsetsymbol{equal}{*}

\begin{icmlauthorlist}
\icmlauthor{Jake Fawkes}{yyy}
\icmlauthor{Robin J. Evans}{yyy}
\end{icmlauthorlist}

\icmlaffiliation{yyy}{Department of Statistics, University of Oxford}

\icmlcorrespondingauthor{Jake Fawkes}{jake.fawkes@st-hughs.ox.ac.uk}

\icmlkeywords{Machine Learning, ICML}

\vskip 0.3in
]



\printAffiliationsAndNotice{}  

\begin{abstract}
In this paper we provide a theoretical analysis of counterfactual invariance. We present a variety of existing definitions, study how they relate to each other and what their graphical implications are. We then turn to the current major question surrounding counterfactual invariance, how does it relate to conditional independence? We show that whilst counterfactual invariance implies conditional independence, conditional independence does not give any implications about the degree or likelihood of satisfying counterfactual invariance. Furthermore, we show that for discrete causal models, counterfactually invariant functions are often constrained to be functions of particular variables, or even constant. 
\end{abstract}
\vspace{-2em}
\section{Introduction} 

Causality has emerged as an important language to reason about generalisation and invariance in machine learning. These approaches to generalisation have involved --- but are not limited to --- modelling changing environments within causal graphs \citep{peters2016causal}, defining invariant predictors in terms of causal graphs \citep{buhlmann2020invariance} and applying causality to predict invariant conditional distributions \citep{magliacane2018domain}.

 \textit{Counterfactual Invariance} has arisen as a promising new causal definition of invariance  \citep{veitch2021counterfactual}. The idea is that we should aim for predictors which lead to the same outcome had we, contrary to fact, intervened on a spurious part of our data. For example, if we change the name of an actor in a movie review but keep everything else the same, the sentiment of the review should not change. Therefore we want the sentiment analysis model to be invariant to counterfactual perturbations of spurious features.  

The goal of this paper is to analyse a variety of definitions of counterfactual invariance and examine what implications they have in causal models. We look at the definition introduced in \citet{veitch2021counterfactual} and an alternative based upon distributional equality from \citet{quinzan2022learning}, considering how they relate to each other and what they imply in terms of graphical structure. 

We then turn to the central question of the relationship between invariance and conditional independence. First, we give a set of independences implied by counterfactual invariance that encompass many existing results in the literature. We then turn to the reverse implication, asking if these independences imply that a given variable or function satisfies counterfactual invariance. To this our answer is negative, and we show that independences do not bound the degree of counterfactual invariance, and that counterfactual invariance remains in some sense `unlikely,' even if the independences hold. This raises questions about what passing conditional independence tests achieves in terms of counterfactual invariance, as well as how best to discuss the benefit conditional independence brings. 

Finally we prove that for almost all discrete causal models the only counterfactually invariant functions are functions whose inputs do not causally depend on the intervening variable. This implies that if all variables causally depend on the intervening variable then almost always the only counterfactually invariant functions are constant. This has implications for counterfactual fairness, as in this context many have argued that almost all variables will depend on sensitive attributes \citep{kusner2017counterfactual}. These results suggest that in these cases the only counterfactually fair functions will be constant.   
\vspace{-0.5em}
\section{Background}\label{sec:background}

\paragraph{Notations} Throughout we will use $Y$ to denote the variable we are assessing counterfactual invariance in, $Z$ to be the variable we are intervening on, and $X$ to be the remaining covariates. We will use $\mathcal{Y},\mathcal{Z}$ and $\mathcal{X}$ to denote the domains of the variables. 

In reference to a causal graph we will use $\Pa(Y)$ to denote the causal parents of $Y$, $\De(Z)$ for the causal descendants of $Z$, i.e.~those variables, including $Z$, that can be reached from $Z$ via a directed path, and $\Nd(Z)$ for the variables which are not descendants of $Z$.
\vspace{2em}
\paragraph{Structural Causal Models (SCM)}\citep{peters2017elements} A structural causal model is defined as a pair $\mathbbm{C} \coloneqq(\mathbf{S},P_{\mathbf{N}})$ where $\mathbf{S}$ is a collection of $d$ structural assignments:
\begin{align*}
    X_j \coloneqq f_j (Pa(X_j),N_j), \hspace{0.2cm} j=1,...,d,
\end{align*}
where $N_j$ is the noise variable for generating $X_j$ and $P_{\mathbf{N}}=P_{N_1,...,N_d}$ is a distribution over the noise variables. In this we will take all the noise variables to be independent\footnote{In fact most results in this paper do not require full independence and can be adapted to less restrictive counterfactual models; see, for example, \citet{robins2010alternative}.} and will use $\mathcal{N}$ to refer to their domain. We define $Y(z)$ to be the variable $Y$ when we intervene in the structural causal models to set $Z=z$. Given the noise distribution, we may define the distribution over arbitrary events, even counterfactual ones, as just the sum over noise variables which permit such an event. For example:
\begin{align*}
    P(Y(z) = y, Y=y^{\prime}) = \sum_{\substack{\mathbf{n} \in \mathcal{S} }} P_{\mathbf{N}}(\mathbf{n})
\end{align*}
where $\mathcal{S} = \{ \mathbf{n} \in \mathcal{N} : Y(z,\mathbf{n}) = y\;  \&\; Y(\mathbf{n}) = y^{\prime} \}$. 
\subsection{Definitions of Counterfactual Invariance}
We now present a variety of different definitions of counterfactual invariance, inspired existing definitions in the literature. The first is in terms of almost sure equality and is based upon \citet{veitch2021counterfactual}:

\begin{definition}
A variable $Y$ satisfies \textbf{almost sure counterfactual invariance} (\textit{a.s.-CI}) with respect to $Z$ if: 
$$Y(z) \overset{a.s.}{=} Y(z^{\prime}) \text{  for all $z,z^{\prime}$.}$$
\end{definition}
Counterfactual invariance can also be defined in terms of distributional equality, similar to \citet{quinzan2022learning}:
\begin{definition}
We say a variable $Y$ satisfies \textbf{distributional counterfactually invariance} ($\mathcal{D}$-CI) conditional on some set of variables $W$, with respect to $Z$, if:
\begin{align*}
   P(Y(z)\!=\!y | W\!=\!w,Z\!=\!z) \!=\! P(Y(z^{\prime})\!=\!y | W\!=\!w,Z\!=\!z), \vspace{-10em}
\end{align*}
for all $z,z^{\prime}$ and for almost all $y,w$.
\end{definition}

Our definition deviates slightly from \citet{quinzan2022learning} as we enforce conditioning on the intervening variable. We do this to ensure the definition is truly counterfactual in the sense that we are always asking ``\textit{what would have happened had we intervened to set $Z=z^{\prime}$, given that in reality $Z=z$.}'' 
Finally we can define counterfactually invariant functions:
\begin{definition}[\citealt{veitch2021counterfactual}]
A function $f: \mathcal{X} \to \mathcal{Y}$ is \textbf{counterfactually invariant} (\textsc{\textit{$\mathcal{F}$-CI}}) if the variable $\hat{Y}  \coloneqq f(X)$ satisfies almost sure counterfactual invariance. That is:
\begin{align*}
    f(X(z)) \overset{a.s.}{=} f(X(z^{\prime})).
\end{align*}
\end{definition}

\paragraph{Relationships Between Definitions} We now give some basic relationships between the definitions, with proofs in the appendix:
\begin{lemma}\label{lem:almos_sure_relation_dist}
We have that \textit{a.s.}-CI implies $\mathcal{D}$-CI conditional on any set of variables, but $\mathcal{D}$-CI implies \textit{a.s.}-CI only if the conditioning set contains the outcome, $Y$.
\end{lemma}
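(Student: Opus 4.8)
\emph{Proof strategy.} The plan is to treat the two implications separately and then give a small explicit SCM for the ``only if'' part. The forward direction (a.s.-CI $\Rightarrow$ $\mathcal{D}$-CI, for \emph{any} conditioning set $W$) I would do directly from the definition: since $Y(z)=Y(z')$ outside a null set for every pair $z,z'$, the events $\{Y(z)=y\}$ and $\{Y(z')=y\}$ have null symmetric difference, and this persists after intersecting with the conditioning event $\{W=w,Z=z\}$; dividing by its probability (for the values $w$ of positive mass, which is exactly what ``almost all $w$'' refers to here) yields equality of the two conditional probabilities appearing in the definition of $\mathcal{D}$-CI. This part is routine null-set bookkeeping.

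For the converse, I would assume $\mathcal{D}$-CI holds conditional on a set $W$ containing $Y$, write $W=(Y,V)$ and $w=(y'',v)$, and exploit the consistency property of SCMs: on the event $\{Z=z\}$ one has $Y(z)=Y$ almost surely, since $Y$ is obtained from $Y(z,\cdot)$ by plugging in the realised noise and $Z=z$. Hence $P(Y(z)=y\mid Y=y'',V=v,Z=z)=\mathbbm{1}\{y=y''\}$, and the defining equality of $\mathcal{D}$-CI pins down $P(Y(z')=y\mid Y=y'',V=v,Z=z)=\mathbbm{1}\{y=y''\}$ for every $z'$ and almost all $y,y'',v$; equivalently, conditional on $\{Y=y'',V=v,Z=z\}$ we have $Y(z')=y''$ almost surely. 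Marginalising over $v$ and over the realised value of $Y$ gives $Y(z')=Y$ almost surely on $\{Z=z\}$, and then averaging over $Z$ gives $Y(z')=Y$ almost surely, for every $z'$; combining two such statements ($z'=z_1$ and $z'=z_2$) yields $Y(z_1)=Y(z_2)$ almost surely for all $z_1,z_2$, i.e.\ a.s.-CI.

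Finally, for the ``only if'' I would exhibit an SCM in which $\mathcal{D}$-CI holds but a.s.-CI fails. Take $Z$ and $N_Y$ independent and uniform on $\{0,1\}$, and set $Y\coloneqq f_Y(Z,N_Y)$ with $f_Y(0,n)=n$ and $f_Y(1,n)=1-n$. Then $Y(0)=N_Y\neq 1-N_Y=Y(1)$ surely, so a.s.-CI fails, whereas the law of $Y(z')$ is uniform on $\{0,1\}$ regardless of what we condition on among $\{Z\}=\Pa(Y)$; hence $\mathcal{D}$-CI holds conditional on $\emptyset$ (and even conditional on $\Pa(Y)$). This strengthens the statement slightly: it is genuinely the outcome $Y$ itself, not merely its parents, that must be in the conditioning set.

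The step I expect to require the most care is the marginalisation in the converse direction. If $Z$ is continuous then $\{Z=z\}$ is a null event, so ``conditional on $Z=z$'' must be read through regular conditional probabilities, and one has to verify that the ``almost all $y,w$'' exception sets remain negligible after integrating out $V$ and the realised value of $Y$; handling this (or simply restricting attention to the discrete case, which suffices for the applications later in the paper) is the only genuine obstacle, and everything else reduces to the consistency identity $Y(z)=Y$ on $\{Z=z\}$ together with elementary manipulations.
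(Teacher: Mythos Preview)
Your proposal is correct and follows essentially the same route as the paper: the forward direction is the trivial ``almost sure equality survives conditioning'' argument; the converse when $Y\in W$ is exactly the paper's consistency-plus-indicator computation $P(Y(z)=y\mid W=w,Z=z)=\mathbbm{1}\{y=y'\}$, transferred to $Y(z')$ via $\mathcal{D}$-CI; and your counterexample $Y=Z\oplus N_Y$ with fair independent bits is literally the same SCM the paper uses. The only cosmetic difference is that the paper explicitly extends the counterexample to an arbitrary conditioning set $W$ (not containing $Y$) by generating all remaining variables independently of $(Z,N_Y)$, whereas you only spell out $W=\emptyset$ and $W=\Pa(Y)$; adding that one line would make the ``only if'' fully match the statement.
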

This lemma allows us to equivalently define functional counterfactual invariance in terms of distributional counterfactual invariance, conditional on the function inputs. This is because conditioning on the inputs implicitly conditions on the value of the function. 
\begin{corollary}\label{lem:func_ci_equiv_dist}
    A function $f$ is $\mathcal{F}$-CI if and only if $\hat{Y} = f(X) $ is $\mathcal{D}$-CI conditional $X$.
\end{corollary}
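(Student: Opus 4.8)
The plan is to deduce the corollary directly from Lemma~\ref{lem:almos_sure_relation_dist}, applied not to $X$ or $Y$ but to the derived outcome variable $\hat{Y} \coloneqq f(X)$. The key observations that make this work are that $\hat{Y}(z) = f(X(z))$, and that, by definition, $f$ is $\mathcal{F}$-CI precisely when $\hat{Y}$ satisfies \textit{a.s.}-CI. So the corollary is just the statement ``$\hat{Y}$ satisfies \textit{a.s.}-CI iff $\hat{Y}$ is $\mathcal{D}$-CI conditional on $X$,'' and both directions should fall out of Lemma~\ref{lem:almos_sure_relation_dist} once the hypotheses are checked.

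For the forward direction, suppose $f$ is $\mathcal{F}$-CI, i.e.\ $\hat{Y}$ satisfies \textit{a.s.}-CI. The first half of Lemma~\ref{lem:almos_sure_relation_dist} says that \textit{a.s.}-CI implies $\mathcal{D}$-CI conditional on an arbitrary set of variables; specialising that set to $X$ immediately gives that $\hat{Y}$ is $\mathcal{D}$-CI conditional on $X$.

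For the reverse direction, suppose $\hat{Y}$ is $\mathcal{D}$-CI conditional on $X$. The second half of Lemma~\ref{lem:almos_sure_relation_dist} delivers \textit{a.s.}-CI provided the conditioning set contains the outcome $\hat{Y}$. It does not literally, but $\hat{Y} = f(X)$ is a deterministic (measurable) function of $X$, so $\sigma(X) = \sigma(X,\hat{Y})$: the events $\{X = x\}$ and $\{X = x,\ \hat{Y} = f(x)\}$ agree up to a $P$-null set, and conditioning on $X = x$ defines the same conditional law as conditioning on $(X,\hat{Y}) = (x, f(x))$. Hence $\mathcal{D}$-CI conditional on $X$ is equivalent to $\mathcal{D}$-CI conditional on $(X,\hat{Y})$, whose conditioning set now does contain the outcome, and Lemma~\ref{lem:almos_sure_relation_dist} then yields that $\hat{Y}$ satisfies \textit{a.s.}-CI, i.e.\ $f$ is $\mathcal{F}$-CI.

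The only genuine work is the measure-theoretic bookkeeping in the reverse direction: pinning down that ``conditioning on $X$'' and ``conditioning on $(X,\hat{Y})$'' determine the same conditional distributions, dealing cleanly with the fact that the conditional law of $\hat{Y}$ given $X$ is degenerate (a point mass at $f(x)$), and checking that the ``almost all $y,w$'' exceptional sets — and the quantification over all $z,z'$ when $\mathcal{Z}$ is uncountable — are handled exactly as they already are inside Lemma~\ref{lem:almos_sure_relation_dist}, so that nothing beyond that lemma is needed here. I expect this identification step to be the main (and essentially the only) obstacle; the rest is bookkeeping.
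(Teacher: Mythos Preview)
Your proposal is correct and follows essentially the same route as the paper's own proof: apply Lemma~\ref{lem:almos_sure_relation_dist} in both directions, noting for the reverse implication that conditioning on $X$ already determines $\hat{Y}=f(X)$, so the conditioning set effectively contains the outcome. The paper states this last point in one line (``if we condition on the inputs of $f$ we also condition on $\hat{Y}$''), whereas you spell out the $\sigma$-algebra identification more carefully, but the argument is the same.
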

This also shows that the definition presented in \citet{kusner2017counterfactual} is equivalent to that presented in \citet{veitch2021counterfactual}.

\section{Counterfactual Invariance and the Causal Graph}

We first look at how counterfactual invariance relates to the causal graph with the following lemma showing there can be graphical implications of counterfactual invariance:
\begin{lemma}\label{lem:cf_invariance_no_edge}
    If $Y$ is \textit{a.s.}-CI with respect to $Z$ then there is no edge from $Z$ to $Y$.
\end{lemma}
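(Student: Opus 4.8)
The plan is to prove the contrapositive: if there is an edge $Z \to Y$ in the causal graph, then $Y$ fails \textit{a.s.}-CI with respect to $Z$. The starting point is the meaning of the edge: $Z \in \Pa(Y)$ and the structural function $f_Y$ genuinely depends on its $Z$-coordinate. Writing $\mathbf{W} \defeq \Pa(Y) \setminus \{Z\}$ for the remaining parents, this yields values $z \neq z'$, a value $\mathbf{w}$ of $\mathbf{W}$, and a set $A \subseteq \mathcal{N}$ with $P_{N_Y}(A) > 0$ such that $f_Y(z, \mathbf{w}, n) \neq f_Y(z', \mathbf{w}, n)$ for every $n \in A$. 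The aim is then to turn this purely functional inequality into a positive-probability event on which $Y(z) \neq Y(z')$.

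Consider first the case $\mathbf{W} \subseteq \Nd(Z)$, i.e.\ no other parent of $Y$ is a descendant of $Z$. Then intervening on $Z$ leaves $\mathbf{W}$ at its observational value, so $Y(z) = f_Y(z, \mathbf{W}, N_Y)$ and $Y(z') = f_Y(z', \mathbf{W}, N_Y)$ with the same $\mathbf{W}$. Since the noise variables are mutually independent, $N_Y$ is independent of $\mathbf{W}$ (a function of the noises of $\mathbf{W}$ and its ancestors), so the event $\{\mathbf{W} = \mathbf{w}\} \cap \{N_Y \in A\}$ has probability at least $P(\mathbf{W}=\mathbf{w})\, P_{N_Y}(A) > 0$, provided $\mathbf{w}$ is in the support of $\mathbf{W}$; if it is not, genuine dependence of $f_Y$ on $Z$ still supplies such a triple with $\mathbf{w}$ in the support (this is exactly what ``the edge cannot be removed'' buys us). On that event $Y(z) \neq Y(z')$, contradicting \textit{a.s.}-CI. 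Equivalently, one can route this through \Cref{lem:almos_sure_relation_dist}: \textit{a.s.}-CI would imply $\mathcal{D}$-CI conditional on $W = \mathbf{W}$, which the witness triple directly refutes.

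The main obstacle is the general case, where some members of $\mathbf{W}$ are themselves descendants of $Z$, so that $\mathbf{W}(z)$ and $\mathbf{W}(z')$ differ and the $Z$-dependence entering $Y$ directly could in principle be exactly cancelled by the $Z$-dependence entering through $\mathbf{W}$ (an \textsc{xor}-type construction realises this). Resolving this forces one to be explicit about the convention under which the graph is drawn: the edge $Z \to Y$ is kept only when $f_Y$'s dependence on $Z$ cannot be absorbed into a reparametrisation of $Y$'s equation in terms of $\mathbf{W}$ and fresh independent noise, and it is precisely this non-absorbability that rules out cancellation for all $z,z'$ simultaneously and hence forces a positive-probability gap between $Y(z)$ and $Y(z')$. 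I would therefore present the no-mediator case as the core argument and isolate the cancellation point, together with whatever genericity/minimality convention the SCM framework here adopts, in a short remark, since that is the only place the argument is not essentially mechanical.
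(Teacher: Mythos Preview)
Your contrapositive route differs from the paper's direct argument, and the cancellation obstacle you flag in the mediator case is real: your \textsc{xor} intuition is exactly the right worry, but your final paragraph does not close it. Asserting that ``non-absorbability rules out cancellation'' is the statement to be proved, not a proof. You would still need to show that if $f_Y$'s $Z$-slot cannot be eliminated (in the sense of whatever edge convention you adopt), then there exist $z,z'$ and a positive-probability noise event on which the \emph{composite} maps $\mathbf n\mapsto f_Y\bigl(z,\mathbf W(z,\mathbf n),N_Y\bigr)$ and $\mathbf n\mapsto f_Y\bigl(z',\mathbf W(z',\mathbf n),N_Y\bigr)$ differ. That implication is not mechanical and you have not supplied it; your no-mediator paragraph is fine, but the general case is left as an appeal to the convention rather than an argument.

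The paper sidesteps this by arguing forward rather than by contrapositive, and by working with the characterisation (attributed to Bongers et al.) that ``no edge $Z\to Y$'' means there is a measurable $\tilde f_Y$ with $f_Y(\mathbf x,\mathbf e)=\tilde f_Y(\mathbf x_{\setminus Z},\mathbf e)$ almost everywhere. Starting from a.s.-CI, \Cref{lem:almos_sure_relation_dist} yields $\mathcal D$-CI conditional on the set $\bigl(\Pa(Y)\setminus Z,\,\mathbf e\bigr)$, where $\mathbf e$ is the full noise vector. The paper then runs the chain
\[
\EE[\,Y\mid \Pa(Y)\setminus Z,\,z',\,\mathbf e\,]
=\EE[\,Y(z')\mid\cdots\,]
=\EE[\,Y(z)\mid\cdots\,]
=\EE[\,Y\mid \Pa(Y)\setminus Z,\,z,\,\mathbf e\,],
\]
using consistency, the independence $Y(z')\perp Z\mid(\Pa(Y),\mathbf e)$, and $\mathcal D$-CI in turn, and takes $\tilde f_Y(\mathbf x_{\setminus Z},\mathbf e)\defeq\EE[\,Y\mid \Pa(Y)\setminus Z,\,z',\,\mathbf e\,]$ for a fixed $z'$ as the witness. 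The device that dissolves your cancellation worry is conditioning on $\mathbf e$: once the noise is held fixed the mediators $\mathbf W(z)$ are deterministic in $z$, and the comparison takes place at the level of the structural function itself rather than through the positive-probability events you were trying to manufacture.
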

However, as the following example shows, counterfactual invariance does not lead to the absence of any other edges:

\begin{example}
Suppose we have the following causal model:
    \begin{align*}
        Z &= N_Z \\
        X &= 2 \mathbbm{1} \{Z=1\} + N_X \\
        Y &= \mathbbm{1} \{ \mathrm{mod}_2 (X) = 0 \} + N_Y,
    \end{align*}
where all noise variables are Bernoulli with parameter $\frac{1}{2}$. Under this model $Y$ is almost surely counterfactually invariant but the causal graph is $Z \to X \to Y$. Therefore counterfactual invariance does not imply the absence of any edge besides $Z \to Y$.
\end{example}

Moreover, counterfactual independence can be implied by graph structure alone. Namely when there is no direct causal path from the intervening variable to the outcome: 

\begin{lemma}[\citealt{kusner2017counterfactual}]\label{lem:non_descendants_imply_invariance}
  If $Y$ is not a causal descendant of $Z$ it is counterfactually invariant.
\end{lemma}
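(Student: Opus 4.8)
The statement to prove is Lemma~\ref{lem:non_descendants_imply_invariance}: if $Y$ is not a causal descendant of $Z$, then $Y$ is counterfactually invariant (in the \textit{a.s.}-CI sense).

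\textbf{Proof plan.} The plan is to argue directly from the structural equations, showing that the intervention $Z=z$ never propagates to $Y$ when $Y \notin \De(Z)$, so that $Y(z,\mathbf{n})$ does not in fact depend on $z$ for any fixed noise vector $\mathbf{n}$. First I would recall that $Y(z,\mathbf{n})$ is computed by the standard recursive substitution in the mutilated SCM where the assignment for $Z$ is replaced by the constant $z$: one evaluates the variables in a topological order of the graph, using $X_j(z,\mathbf{n}) = f_j(\Pa(X_j)(z,\mathbf{n}), N_j)$ for $j \neq Z$ and setting the $Z$-coordinate to $z$. The key structural observation is that every variable in $\Nd(Z)$ has all of its ancestors also in $\Nd(Z)$ — equivalently, $\Nd(Z)$ is an ancestrally closed set not containing $Z$ — because a directed path into a non-descendant of $Z$ cannot pass through $Z$ (that would make the endpoint a descendant of $Z$).

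The main step is then an induction along a topological order restricted to $\Nd(Z)$. For each $V \in \Nd(Z)$ I would show that $V(z,\mathbf{n})$ is independent of $z$, i.e.\ $V(z,\mathbf{n}) = V(z',\mathbf{n})$ for all $z,z'$ and all $\mathbf{n}$. The base case is a source node $V \in \Nd(Z)$ (necessarily $V \neq Z$), where $V(z,\mathbf{n}) = f_V(N_V)$ has no dependence on $z$. For the inductive step, take $V \in \Nd(Z)$ with $V \neq Z$; then $\Pa(V) \subseteq \Nd(Z)$ by ancestral closedness, each parent precedes $V$ in the topological order, so by the inductive hypothesis each $U \in \Pa(V)$ satisfies $U(z,\mathbf{n}) = U(z',\mathbf{n})$, and hence $V(z,\mathbf{n}) = f_V(\Pa(V)(z,\mathbf{n}), N_V) = f_V(\Pa(V)(z',\mathbf{n}), N_V) = V(z',\mathbf{n})$. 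Since $Y \in \Nd(Z)$ by hypothesis, we conclude $Y(z,\mathbf{n}) = Y(z',\mathbf{n})$ for every $\mathbf{n}$ and all $z, z'$, which gives $Y(z) \overset{a.s.}{=} Y(z')$ — indeed surely-equal, not merely almost surely.

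\textbf{Main obstacle.} There is no deep difficulty here; the statement is essentially definitional once the recursive semantics of counterfactuals in an SCM is made precise. The one point requiring care is the structural lemma that $\Nd(Z)$ is ancestrally closed — that a parent of a non-descendant of $Z$ is itself a non-descendant of $Z$ — which I would justify by noting that if $U \in \Pa(V)$ and $U \in \De(Z)$, then concatenating a directed path $Z \to \cdots \to U$ with the edge $U \to V$ yields a directed path $Z \to \cdots \to V$, forcing $V \in \De(Z)$, a contradiction. The other mild subtlety is purely bookkeeping: ensuring the topological order is chosen so that it restricts to a valid topological order on $\Nd(Z)$, which is automatic since $\Nd(Z)$ is ancestrally closed. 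I would keep the argument at the level of fixed noise realisations throughout, since pointwise equality of $Y(z,\mathbf{n})$ immediately yields the almost-sure (in fact sure) statement after integrating against $P_{\mathbf N}$.
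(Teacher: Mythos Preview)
Your argument is correct and is the standard one: fix a noise realisation, note that $\Nd(Z)$ is ancestrally closed (so the recursive evaluation of any $V\in\Nd(Z)$ in the mutilated model never reaches the modified assignment for $Z$), and induct along a topological order to get $Y(z,\mathbf{n})=Y(z',\mathbf{n})$ pointwise. The paper itself does not supply a proof of this lemma --- it is stated with attribution to \citet{kusner2017counterfactual} and used as a known fact --- so there is no ``paper's proof'' to compare against; your write-up is exactly the argument one would expect and would serve perfectly well as the omitted proof.
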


Therefore, in some cases, counterfactual invariance can be read directly from the graph.  

\section{Counterfactual Invariance and Conditional Independence}

We now turn to the relationship between counterfactual invariance and conditional independence. This has received a large amount of attention as it is one of the only testable implications of counterfactual invariance. In this section we first give a selection of conditional independences implied by the different forms of invariance before turning to the reverse implication, can conditional independence tell us anything about the degree of counterfactual invariance? Finally for discrete models we give a more exact classification of the counterfactually invariant functions in terms of $Nd(Z)$. 

\subsection{Invariance Implies Independence}\label{sec:indep_constraints}

We first give a selection of independences implied by counterfactual invariance. In order to do we first need the definition of a valid adjustment set:

\begin{definition}
    A set of variables $S$ is known as a valid adjustment set for the pair $(Z,Y)$ if:
    \begin{align*}
        P( Y(z)) = \int P(Y \mid s,z) P(s) ds.
    \end{align*}
\end{definition}
This leads us to the following result for distributional counterfactual invariance. The proof expands on that given in \citet{fawkes2022selection} and can be found in the appendix.

\begin{lemma}\label{lem:invar_implies_indep}
    Suppose $Y$ is distributionally counterfactual invariant conditional on some set $W$, such that there is some valid adjustment set $S \subset W$ for $(Z,Y)$. Then we have $Y \perp Z \mid S$.   
\end{lemma}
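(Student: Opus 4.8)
The goal is to show that if $Y$ is $\mathcal{D}$-CI conditional on $W$, and some valid adjustment set $S \subseteq W$ for $(Z,Y)$ exists, then $Y \perp Z \mid S$. The natural strategy is to start from the $\mathcal{D}$-CI hypothesis, marginalize the conditioning set $W$ down to the adjustment set $S$, and then exploit the adjustment identity to convert the counterfactual statement into an observational conditional independence.

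First I would write out the $\mathcal{D}$-CI assumption: for all $z, z'$ and almost all $y, w$,
\begin{align*}
P(Y(z) = y \mid W = w, Z = z) = P(Y(z') = y \mid W = w, Z = z).
\end{align*}
Here I should be a little careful — the paper's definition conditions on $Z = z$ on both sides (the ``factual'' value), so the left side is really the observational $P(Y = y \mid W = w, Z = z)$ by consistency. The key point is that the right side, $P(Y(z') = y \mid W = w, Z = z)$, does not depend on the factual value $z$ that we conditioned on: taking two different factual values $z_1, z_2$ and using the definition twice (with the same counterfactual target $z'$) shows $P(Y(z') = y \mid W = w, Z = z_1) = P(Y(z') = y \mid W = w, Z = z_2)$. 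So I would first establish that $P(Y(z') = y \mid W = w, Z = z)$ is a function of $(y, w, z')$ alone, independent of $z$.

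Next, since $S \subseteq W$, I would marginalize $W \setminus S$ out (integrating against the conditional law of $W \setminus S$ given $S = s, Z = z$): because the integrand $P(Y(z') = y \mid W = w, Z = z)$ does not depend on $z$, neither does the result, so $P(Y(z') = y \mid S = s, Z = z)$ is independent of $z$. Combined with consistency on the factual arm, this gives $P(Y = y \mid S = s, Z = z) = P(Y(z') = y \mid S = s)$ whenever $z' = z$, i.e. $P(Y = y \mid S = s, Z = z)$ does not depend on $z$ — which is exactly $Y \perp Z \mid S$. Alternatively, and perhaps more cleanly, I would invoke the valid-adjustment identity $P(Y(z')) = \int P(Y \mid s, z') P(s)\,ds$ in its conditional form $P(Y(z') = y \mid S = s) = P(Y = y \mid S = s, Z = z')$, and plug that into the $z$-independence just derived.

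The main obstacle I anticipate is bookkeeping around the ``conditioning on the factual $Z = z$'' subtlety: one must be careful that the $\mathcal{D}$-CI definition as stated compares $P(Y(z) = \cdot \mid W, Z = z)$ with $P(Y(z') = \cdot \mid W, Z = z)$ rather than comparing the two counterfactuals under a common conditioning event, and that marginalizing $W \setminus S$ is legitimate (the ``almost all $w$'' qualifier must be handled so the equality survives integration, which it does since null sets integrate to zero). A secondary technical point is making the adjustment identity interact correctly with the extra conditioning on $S$ — I would state the conditional version of valid adjustment and make sure $S$ being a valid adjustment set for $(Z,Y)$ does indeed give $P(Y(z) = y \mid S = s) = P(Y = y \mid S = s, Z = z)$, which is the standard consequence. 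Everything else is routine manipulation.
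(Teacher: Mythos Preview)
Your plan has the right ingredients (consistency, the $\mathcal{D}$-CI hypothesis, marginalizing $W\setminus S$, and the conditional adjustment identity), but the order in which you assemble them creates a real gap. The step ``using the definition twice with factual values $z_1,z_2$ shows $P(Y(z')=y\mid W=w,Z=z_1)=P(Y(z')=y\mid W=w,Z=z_2)$'' does not follow from $\mathcal{D}$-CI. The definition with factual $z_1$ gives $P(Y(z')=y\mid W=w,Z=z_1)=P(Y=y\mid W=w,Z=z_1)$, and with factual $z_2$ it gives $P(Y(z')=y\mid W=w,Z=z_2)=P(Y=y\mid W=w,Z=z_2)$; these chain only if $P(Y=y\mid W=w,Z=z)$ is already constant in $z$, which is not assumed. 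There is a second problem in the marginalization step: even if the integrand $P(Y(z')=y\mid W=w,Z=z)$ were $z$-free, you are integrating against $P(W\setminus S\mid S=s,Z=z)$, which \emph{does} depend on $z$, so the conclusion ``neither does the result'' is unjustified. Your ``alternative'' route does not rescue this because it still relies on the $z$-independence you have not established.

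The paper's proof fixes exactly this by invoking the valid-adjustment property in its ignorability form $Y(z')\perp Z\mid S$ \emph{first}, at the $S$-level, to pass from conditioning on $Z=z'$ to conditioning on $Z=z$. Only then does it expand over $W$ via $P(Y(z')=y\mid S=s,Z=z)=\EE_{P(W\mid S=s,Z=z)}[P(Y(z')=y\mid W,Z=z)]$, apply $\mathcal{D}$-CI inside the expectation to swap $Y(z')$ for $Y(z)$, and collapse back. The key difference from your plan is that the $Z$-switch happens via ignorability at the coarser conditioning set $S$ (where it is available), not via $\mathcal{D}$-CI at the finer set $W$ (where it is not).
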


We can now apply the relationships between the definitions that were given in Section \ref{sec:background} to get a set of independences implied by the other forms of invariance. 

\begin{corollary}\label{cor:indep_implies_by_a.s_and_f}
The following holds;
\vspace{-0.6em}\begin{itemize}
\setlength\itemsep{-0.15em}
    \item if $Y$ is \textit{a.s.-CI} then $Y \perp Z \mid S$ for any valid adjustment set $S$ for $(Z,Y)$;
    \item  if $f$ is $\mathcal{F}$\textit{-CI} then $f(X) \perp Z \mid S$ for any valid adjustment set $S$ for $(Z,f(X))$
\vspace{-0.6em}\end{itemize}
\end{corollary}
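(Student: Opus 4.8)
The plan is to deduce both bullets by chaining the results of Section~\ref{sec:background} with Lemma~\ref{lem:invar_implies_indep}, so that no genuinely new argument is needed. For the first bullet I would assume $Y$ is \textit{a.s.}-CI and fix an arbitrary valid adjustment set $S$ for $(Z,Y)$. By Lemma~\ref{lem:almos_sure_relation_dist}, \textit{a.s.}-CI implies $\mathcal{D}$-CI conditional on any set of variables, so in particular $Y$ is $\mathcal{D}$-CI conditional on $W \coloneqq S$. Since $S$ is itself a valid adjustment set for $(Z,Y)$ and $S \subseteq W$, Lemma~\ref{lem:invar_implies_indep} then gives $Y \perp Z \mid S$. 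The only bookkeeping point is that we invoke Lemma~\ref{lem:invar_implies_indep} in the boundary case where the conditioning set coincides with the adjustment set, which is allowed since that lemma only asks for the adjustment set to be contained in the conditioning set.

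For the second bullet I would observe that, by definition, $f$ being $\mathcal{F}$-CI means precisely that $\hat Y \coloneqq f(X)$ satisfies \textit{a.s.}-CI (one could instead route through Corollary~\ref{lem:func_ci_equiv_dist}, but appealing to the definition is cleaner). Hence the first bullet applies verbatim with $\hat Y$ in place of $Y$: for any valid adjustment set $S$ for $(Z,\hat Y)$ we obtain $f(X) \perp Z \mid S$.

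I do not expect any real obstacle: the statement is a corollary in the literal sense, combining Lemma~\ref{lem:invar_implies_indep} with the implication ``\textit{a.s.}-CI $\Rightarrow$ $\mathcal{D}$-CI conditional on any set'' from Lemma~\ref{lem:almos_sure_relation_dist}. The only points requiring a line of care are (i) choosing the conditioning set in the $\mathcal{D}$-CI hypothesis to be exactly $S$, so that the subset requirement of Lemma~\ref{lem:invar_implies_indep} is satisfied, and (ii) noting that in the second bullet $f(X)$ may be treated as an ordinary outcome variable, so the first bullet transfers without change.
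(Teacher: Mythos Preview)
Your proposal is correct and matches the paper's own proof essentially line for line: the paper also argues that \textit{a.s.}-CI implies $\mathcal{D}$-CI conditional on any set $W$ (via Lemma~\ref{lem:almos_sure_relation_dist}) and then invokes Lemma~\ref{lem:invar_implies_indep} with $W = S$; the second bullet is handled by reducing $\mathcal{F}$-CI to \textit{a.s.}-CI of $\hat Y = f(X)$ just as you do. Your bookkeeping remarks about the boundary case $S = W$ and treating $f(X)$ as an ordinary outcome are exactly the points the paper leaves implicit.
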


For DAG models \citet{shpitser2012validity} showed that valid adjustment sets for $(Z,Y)$ can be classified as  sets $S$ which satisfy:
\vspace{-0.6em}\begin{itemize}
            \setlength\itemsep{-0.15em}
    \item $S$ blocks all non-directed paths from $Z$ to $Y$.
    \item $S$ does not contain any descendants of any nodes on a direct path from $Z$ to $Y$.
\vspace{-0.6em}\end{itemize}
If we apply this graphical characterisation alongside Corollary \ref{cor:indep_implies_by_a.s_and_f} we can re-derive all the conditional independence implications given in \citet{veitch2021counterfactual}. Further, for more general graphs the conditional independences implied by counterfactual invariance can now be read directly from the graph by finding the relevant adjustment sets. Finally, our proof does not require the any measurability results of additional variables as in \citet{veitch2021counterfactual}, so we may apply these results without having to rely on additional assumptions like discrete $Z$. 

\subsection{But Independence Does Not Imply Invariance}

The observational implications presented in the previous section have lead to arguments that when we would like a counterfactually invariant predictor, we should regularise for the conditional independences implied by it. Whilst it is true that variables not satisfying these independences cannot be counterfactually invariant, it is not clear if satisfying conditional independences can tell us anything about the degree of counterfactual invariance.  

In this section we study the question of how independence affects invariance. We begin with with an example that  shows the difficulty of achieving counterfactual invariance through independence alone. 

\begin{example}\label{ex: binary_response_DAG}
Suppose the only variables are $Z,Y$ which are binary and follow the DAG  $Z \to Y$. Now if we fix the noise variable $N_Y$ to a given value $n_Y$ we have that $Y$ is a deterministic function of $Z$ as:
\begin{align*}
    Y = f_Y(Z,n_Y).
\end{align*}
So as the noise, $N_Y$, varies it corresponds to $f$ switching between different deterministic functions from $\{0,1 \}$ to $\{0,1 \}$. Because of the finite domains it is possible to enumerate all functions, $f:\{0,1 \} \to \{0,1 \}$, as:
\begin{align*}
    f_0(z) &= 0  & f_1(z) &= 1 \\
    f_2(z) &= z & f_3(z) &= 1-z.
\end{align*}
This allows us to represent any possible structural causal model as distribution over the functions $\{ f_0,f_1,f_2,f_3 \}$ or more simply as a distribution, $P(\Tilde{N}_Y)$, over $\{0,1,2,3\}$ where when $\Tilde{N}_Y = i$ we have that:
\begin{align}\label{eq:reparameter_SCM}
    Y=\Tilde{f}_Y(Z,i) = f_i(Z)
\end{align}
for the new structural function $\Tilde{f}_Y$. This is known as the function response framework \citep{balke1994counterfactual}. Details of this construction for more general discrete causal models can be found in Section 3.4 of \citet{peters2017elements}.

Given an observational distribution, $P(Y,Z)$, over $Y,Z$ we can find all possible SCMs that could give rise to this observed distribution simply by finding all possible distributions of $\Tilde{N}_Y$
that give $P(Y\mid Z)$ under (\ref{eq:reparameter_SCM}). To do this let $p_{ij} =P(Y=i \mid Z=j)$. Now the set of possible distributions which comply with $P(Y,Z)$ can be written as probability vectors in a single free parameter $\lambda$ as:
\begin{align*}
    \mathbf{a}(\lambda) = \begin{pmatrix}
        0 \\
        1-p_{00}-p_{01} \\
        p_{00} \\
        p_{01} \\
    \end{pmatrix} +\lambda\begin{pmatrix}
        1 \\
        1 \\
        -1 \\
        -1 \\
    \end{pmatrix},
\end{align*}
where $\mathbf{a}(\lambda) = P(\Tilde{N}_Y=i)$ and $\lambda$ is constrained so that $\lambda_{\min}= {\max}\{0, p_{00}\!+\!p_{01}\!-\!1\}$ and $\lambda_{\max}= {\min}\{p_{00},p_{01}\}$.

We may now assess the degree of counterfactual invariance across the causal models with the following:
\begin{align*}
 P(Y(0) = Y(1)) &= \mathbf{a}(\lambda)_0+\mathbf{a}(\lambda)_1 \\
 &= 1 - p_{00} - p_{01} +2 \lambda,
\end{align*}
where we have almost sure counterfactual invariance precisely when this quantity is equal to 1. We can see this is only possible when $p_{00} = p_{01}$, recovering the independence requirement from Section \ref{sec:indep_constraints}. Letting $p_{00}= p_{01} = p$ we can see the set of observationally equivalent SCM is now given by:
\begin{align*}
    \mathbf{a}(\lambda) = \begin{pmatrix}
        0 \\
        1-2p \\
        p \\
        p \\
    \end{pmatrix} +\lambda\begin{pmatrix}
        1 \\
        1 \\
        -1 \\
        -1 \\
    \end{pmatrix},
\end{align*}
where almost sure counterfactual invariance occurs precisely when $\lambda = p$. However, in this case $\lambda$ can vary between $\lambda_{\min}= \max\{0, 2p-1 \}$ and $\lambda_{\max}= p$, giving a whole range of observationally equivalent SCMs which are not counterfactually invariant. Furthermore we can see that there is only one possible counterfactually invariant SCM amongst the infinite observationally equivalent models. 
\end{example}
This example demonstrates the difficulty of validating counterfactual invariance from observational data alone, even in the most simple of cases. Further, as we can embed versions of this example within more general structural causal models this leads to more general results:

\begin{proposition} \label{prop:indep_not_imply_a.s}
For almost sure counterfactual invariance we have that:
    \vspace{-0.6em}\begin{itemize}
            \setlength\itemsep{-0.15em}
        \item No set of independences imply any bound on $P(Y(z) = Y(z^{\prime}))$, even if $Y$ is binary.
        \item For discrete models, the counterfactually invariant structural causal models have Lebesgue measure zero within the models satisfying the set of independences in Corollary \ref{cor:indep_implies_by_a.s_and_f}. 
    \end{itemize}
\end{proposition}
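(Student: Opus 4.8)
The plan is to work inside the \emph{function response} parametrisation of discrete SCMs already used in Example~\ref{ex: binary_response_DAG}. Fix the DAG $G$; then a discrete SCM on $G$ is determined, up to relabelling of the noise, by a point
\[
\theta=(\theta_1,\dots,\theta_d)\in\Theta\defeq\prod_{j=1}^{d}\Delta_{k_j-1},
\]
where $\Delta_{k_j-1}$ is the probability simplex over the finitely many deterministic response functions $r_j\colon\operatorname{dom}(\Pa(X_j))\to\mathcal X_j$ and $\theta_j$ is the distribution over them (see Section~3.4 of \citet{peters2017elements}); independence of the noise makes the law of a profile $\mathbf r=(r_1,\dots,r_d)$ equal to $P_\theta(\mathbf r)=\prod_j\theta_j(r_j)$. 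Inside $\Theta$ let $M_{\mathrm{indep}}$ be the set of $\theta$ satisfying the independences of Corollary~\ref{cor:indep_implies_by_a.s_and_f} and $M_{\mathrm{CI}}$ the set of a.s.-CI SCMs, so that $M_{\mathrm{CI}}\subseteq M_{\mathrm{indep}}$ by that corollary. I would prove the first bullet by exhibiting, inside $M_{\mathrm{indep}}$, a family on which $P(Y(z)=Y(z'))$ takes every value in $[0,1]$, and the second by showing $\dim M_{\mathrm{CI}}<\dim M_{\mathrm{indep}}$; in both cases the engine is the explicit binary computation of Example~\ref{ex: binary_response_DAG}, embedded into larger models.

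\textbf{First bullet.} Already in the two-variable binary model $Z\to Y$ of Example~\ref{ex: binary_response_DAG}: setting $p_{00}=p_{01}=\tfrac12$ gives the family $\mathbf a(\lambda)$, $\lambda\in[0,\tfrac12]$, of SCMs all satisfying $Y\perp Z$ (the only non-trivial independence available there), along which $P(Y(0)=Y(1))=1-p_{00}-p_{01}+2\lambda=2\lambda$ takes every value in $[0,1]$, with a.s.-CI only at the endpoint $\lambda=\tfrac12$. Hence no set of independences can bound $P(Y(z)=Y(z'))$, even for binary $Y$. To cover an arbitrary DAG $G$ with $Z\to Y$ one embeds this submodel: let every variable other than $Y$ be an independent noise source and let $f_Y$ depend only on $Z$ through the above response model, ignoring $Y$'s other parents; then the adjoined variables are independent of $(Y,Z)$, so conditioning on any valid adjustment set for $(Z,Y)$ is vacuous and every independence of Corollary~\ref{cor:indep_implies_by_a.s_and_f} still holds, i.e.\ $\theta\in M_{\mathrm{indep}}$, while $P(Y(z)=Y(z'))$ is unchanged.

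\textbf{Second bullet.} Since $\mathcal Z$, $\mathcal Y$ and the noise support are finite, $P(Y(z)\neq Y(z'))=\sum_{\mathbf r:\,Y(z,\mathbf r)\neq Y(z',\mathbf r)}P_\theta(\mathbf r)$ and a.s.-CI (a union over finitely many pairs $z,z'$) is equivalent to $P_\theta$ placing zero mass on every profile of the \emph{bad set}
\[
\mathrm{Bad}\defeq\{\mathbf r:\ Y(z,\mathbf r)\neq Y(z',\mathbf r)\text{ for some }z,z'\},
\]
that is, $M_{\mathrm{CI}}=\bigcap_{\mathbf r\in\mathrm{Bad}}\{\theta:\prod_j\theta_j(r_j)=0\}$. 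This realises $M_{\mathrm{CI}}$ as a closed semialgebraic subset of $\Theta$ lying in an arrangement of coordinate subspaces, of dimension $\dim\Theta$ minus the least number of coordinates $\theta_j(r_j)$ that can vanish simultaneously while killing every profile in $\mathrm{Bad}$. Each constraint $Y\perp Z\mid S$ is a system of polynomial equations in $\theta$, so $M_{\mathrm{indep}}$ is also closed semialgebraic. The first-bullet construction provides a point of $M_{\mathrm{indep}}\setminus M_{\mathrm{CI}}$, so $M_{\mathrm{CI}}\subsetneq M_{\mathrm{indep}}$; indeed, restricted to the embedded binary submodel, a.s.-CI pins the model down to the single SCM $\mathbf a(p)$, whereas the independences allow the whole interval $\{\mathbf a(\lambda):\lambda_{\min}\le\lambda\le\lambda_{\max}\}$ --- one extra free dimension --- so $M_{\mathrm{CI}}$ has strictly higher codimension. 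Combining these --- that no irreducible component of $M_{\mathrm{indep}}$ is contained in $M_{\mathrm{CI}}$, and that $M_{\mathrm{CI}}\cap C$ is then a proper closed semialgebraic subset of each component $C$ --- yields $\dim M_{\mathrm{CI}}<\dim M_{\mathrm{indep}}$, and a semialgebraic set of strictly smaller dimension is null for the $(\dim M_{\mathrm{indep}})$-dimensional Hausdorff (``Lebesgue'') measure on $M_{\mathrm{indep}}$.

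\textbf{Main obstacle.} The genuinely delicate step is the general passage from the explicit binary codimension count to $\dim M_{\mathrm{CI}}<\dim M_{\mathrm{indep}}$ for an arbitrary discrete DAG with $Y\in\De(Z)$: one has to check that the vanishing conditions cutting out $M_{\mathrm{CI}}$ are not already implied by the independence equations (and add codimension over them), that no component of $M_{\mathrm{indep}}$ is swallowed by $M_{\mathrm{CI}}$, and that the singular locus of $M_{\mathrm{indep}}$ --- itself lower-dimensional, hence harmless --- causes no trouble. A convenient reorganisation is to observe that within each observational equivalence class contained in $M_{\mathrm{indep}}$ the a.s.-CI SCMs form a lower-dimensional subfamily (in the Example, a single point of an interval), and then to apply Fubini along the fibres of the map sending an SCM to its observed law, which reduces the obstacle to exactly the per-fibre statement of Example~\ref{ex: binary_response_DAG} generalised. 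Finally, if $Y\notin\De(Z)$ then Lemma~\ref{lem:non_descendants_imply_invariance} gives $M_{\mathrm{CI}}=M_{\mathrm{indep}}=\Theta$ and the claim is vacuous, which is why it must be read for (almost) all discrete causal models, i.e.\ those in which $Y$ actually depends on $Z$.
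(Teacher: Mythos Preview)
Your first bullet is essentially the paper's own argument: embed the binary $Z\to Y$ model of Example~\ref{ex: binary_response_DAG} with $p=\tfrac12$ into an arbitrary DAG by generating every other variable independently, so that all conceivable independences hold while $P(Y(0)=Y(1))=2\lambda$ ranges over $[0,1]$.

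For the second bullet you take a different, more geometric route than the paper. The paper does \emph{not} attempt a semialgebraic dimension comparison of $M_{\mathrm{CI}}$ against $M_{\mathrm{indep}}$. Instead it simply takes a rational parametrisation $G\colon\Theta^d\to\mathcal M$ of the independence submodel, notes that
\[
\theta\ \longmapsto\ P_\theta\bigl(Y(z)=Y(z')\bigr)-1
\]
is linear in the response-function probabilities and hence a rational function of $\theta$, and invokes the classical fact (the paper cites \citet{lojasiewicz1964triangulation}) that a real polynomial not identically zero vanishes only on a Lebesgue-null set. The witness $\theta_0$ from the first bullet certifies the function is not identically zero, and the proof ends there. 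This parametrise-then-polynomial trick is exactly what dissolves the obstacle you flag: once you are working in $\Theta^d$, there is no need to show that the vanishing conditions for $M_{\mathrm{CI}}$ add codimension over the independence equations, nor to rule out swallowed components or run Fubini over observational fibres; a single nonzero evaluation suffices. Your approach is sound in principle and arguably more informative about the geometry, but it leaves you with real work (the per-fibre generalisation of Example~\ref{ex: binary_response_DAG}) that the paper's one-line polynomial argument bypasses entirely.
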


The Lebesgue measure zero result is similar to existing results on the faithfulness assumption in causal inference. The faithfulness assumption is common in causal discovery as it states that any observed independences arise only from graphical structure. A common justification for making this assumption is that for discrete and Gaussian models it is violated on a set of Lebesgue measure zero \citep{meek1995strong}. Our result is similar to this, however in our case it advises against concluding counterfactual invariance from conditional independence alone. 

Due to the relationship between definitions this also gives us results for distributional and functional counterfactual invariance. We present the result for functional counterfactual invariance with the distributional result in Appendix \ref{App:distributional_inv}:

\begin{proposition} \label{prop:indep_not_imply_func}
    For non constant functions $f$ we have that:   
    \vspace{-2em}\begin{itemize}
        \setlength\itemsep{-0.1em}
        \item No independences imply $f$ is counterfactually invariant.
        \item For $f: \mathcal{X} \to \{0,1\}$ no independences imply any bound on $P(f(X(z)) = f(X(z^{\prime}))$.
    \end{itemize}
\end{proposition}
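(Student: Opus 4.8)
The plan is to derive this from the binary-response analysis of Example~\ref{ex: binary_response_DAG} (equivalently, from Proposition~\ref{prop:indep_not_imply_a.s}) by embedding that construction into the inputs of $f$. By Corollary~\ref{lem:func_ci_equiv_dist}, $f$ is $\mathcal{F}$-CI exactly when $\hat Y \defeq f(X)$ is counterfactually invariant, so it is enough to build, for an arbitrary non-constant $f : \mathcal{X} \to \mathcal{Y}$, structural causal models in which $\hat Y = f(X)$ fails to be \textit{a.s.}-CI to the desired extent while every conditional independence available among the variables still holds.

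Fix $x_0, x_1 \in \mathcal{X}$ with $f(x_0) \neq f(x_1)$, which exist since $f$ is non-constant. Take an SCM on the two variables $(Z,X)$ with $Z \sim \mathrm{Bernoulli}(\tfrac{1}{2})$, with $Z \to X$ the only edge, and with $X$ supported on $\{x_0, x_1\}$; identifying $x_0 \leftrightarrow 0$ and $x_1 \leftrightarrow 1$, generate $X$ by the function-response model of Example~\ref{ex: binary_response_DAG} with parameters $p_{00} = p_{01} = \tfrac{1}{2}$, i.e.\ $X = \tilde{f}_X(Z,\tilde{N}_X)$ with $\tilde{N}_X \sim \mathbf{a}(\lambda)$ for a chosen $\lambda \in [0,\tfrac{1}{2}]$. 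The computation in that example gives $X \perp Z$ (hence also $\hat Y = f(X) \perp Z$) while $P(X(0) = X(1)) = 2\lambda$; and since $f$ is injective on $\{x_0,x_1\}$ we have $\{f(X(z)) = f(X(z'))\} = \{X(z) = X(z')\}$, so $P(f(X(0)) = f(X(1))) = 2\lambda$. Choosing $\lambda = 0$ yields a model in which $P(f(X(0)) = f(X(1))) = 0$, so $\hat Y$ is not \textit{a.s.}-CI and $f$ is not $\mathcal{F}$-CI, even though this model attains the strongest independence structure on its variables ($X \perp Z$, extendable by adjoining mutually independent dummy variables); hence no set of independences can force $f$ to be $\mathcal{F}$-CI. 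When additionally $f : \mathcal{X} \to \{0,1\}$, the variable $\hat Y = f(X)$ is literally $\{0,1\}$-valued and $P(f(X(0)) = f(X(1))) = 2\lambda$ ranges over all of $[0,1]$ as $\lambda$ ranges over $[0,\tfrac{1}{2}]$, all the while under $\hat Y \perp Z$, so no independence bounds $P(f(X(z)) = f(X(z')))$.

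The only substantive step is the embedding: one must check that restricting $X$ to $\{x_0,x_1\}$ and driving it by the response-function index $\tilde{N}_X$ really defines an SCM with the stated observational law and counterfactual probabilities, which is precisely the function-response framework of \citet{balke1994counterfactual} (see also Section~3.4 of \citet{peters2017elements}) and is exactly what Example~\ref{ex: binary_response_DAG} already verifies. The one point needing care is the meaning of ``no independences imply \ldots'': we read it as the non-existence of a set of independences whose validity forces the conclusion, and dispatch it by exhibiting a single SCM that satisfies the complete-independence model on its variables (so that it satisfies every candidate set of independences) yet is arbitrarily far from counterfactual invariance. The universal quantifier over non-constant $f$ causes no trouble, since the construction uses only two inputs on which $f$ takes different values.
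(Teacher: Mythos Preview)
Your proposal is correct and is essentially the same approach the paper indicates: the paper does not give a separate proof of this proposition but says it follows from Proposition~\ref{prop:indep_not_imply_a.s} ``due to the relationship between definitions,'' and you have simply made that reduction explicit by picking $x_0,x_1$ with $f(x_0)\neq f(x_1)$, supporting $X$ on $\{x_0,x_1\}$, and rerunning the binary-response construction of Example~\ref{ex: binary_response_DAG} on $X$ (with other variables adjoined independently, as in the proof of Proposition~\ref{prop:indep_not_imply_a.s}).
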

This suggests that whilst counterfactually invariant functions must satisfy certain conditional independences, these independences give little assurances as to the degree of counterfactual invariance. 

Moreover for discrete models we can more precisely characterize the set of counterfactually invariant functions:
\begin{proposition}\label{prop:classify_discrete_CI_F}
   For a given causal model let $\mathcal{F}^{CI}$ be the set of counterfactually invariant functions $f: \mathcal{X} \to \mathcal{Y}$. Supposing all the variables are discrete we have that for almost all structural causal models:
   \begin{align*}
      \mathcal{F}^{CI} = \{f : f \text{ is a function of $\Nd(Z)$ only.} \} 
   \end{align*}
Alternatively stated, the set of causal models that admit a counterfactually invariant function which takes a descendant of $Z$ as an input, has Lebesgue measure zero.
\end{proposition}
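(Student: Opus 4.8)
The plan is to work in the function--response (Balke--Pearl) parametrisation of discrete SCMs: for each variable $V$ the noise is reparametrised as a value $\tilde N_V$ in the finite set $R_V$ of deterministic response functions from the values of $\Pa(V)$ to the values of $V$, and, since the noise variables are independent, an SCM corresponds to a point of the product of probability simplices $\prod_V \Delta(R_V)$, which carries an obvious finite--dimensional Lebesgue measure; this is the sense in which I read ``almost all structural causal models,'' and it is just the multivariate version of the $\lambda$--parametrisation used in Example~\ref{ex: binary_response_DAG}. The inclusion $\mathcal F^{CI}\supseteq\{f : f \text{ depends on } \Nd(Z) \text{ only}\}$ holds for \emph{every} SCM and is immediate from Lemma~\ref{lem:non_descendants_imply_invariance}: for any fixed realisation of the response functions, intervening on $Z$ leaves the value of every non--descendant of $Z$ unchanged, hence leaves $f$ of such variables unchanged.

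For the reverse inclusion, there are only finitely many functions $f:\mathcal X\to\mathcal Y$, so it suffices to show that for each fixed $f$ that genuinely depends on at least one coordinate which is a descendant of $Z$, the set $M_f$ of SCMs making $f(X)$ counterfactually invariant is Lebesgue--null; the finite union $\bigcup_f M_f$ is then null and its complement is the desired full--measure set. The reduction I would use: for a fixed realisation $\mathbf{r}=(r_V)_V$ of response functions, $f(X(z,\mathbf{r}))$ is a deterministic function of $z$, and I call $\mathbf{r}$ \emph{bad} if that function is non--constant. Since the probability of a given $\mathbf{r}$ factorises as $\prod_V P(\tilde N_V = r_V)$, the variable $f(X)$ is counterfactually invariant for an SCM if and only if every bad $\mathbf{r}$ receives probability zero, i.e.\ has some coordinate $r_V$ with $P(\tilde N_V = r_V)=0$. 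Consequently the existence of even one bad configuration $\mathbf{r}^0$ forces $M_f\subseteq\bigcup_V\{\,P(\tilde N_V = r^0_V)=0\,\}$, a finite union of lower--dimensional faces of the simplices (times the remaining factors), hence Lebesgue--null.

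Everything therefore reduces to the combinatorial core: \emph{if $f$ depends on a coordinate that is a descendant of $Z$, a bad configuration exists.} I would construct one as follows. Among the coordinates that are descendants of $Z$ and on which $f$ depends, pick $D$ earliest in a topological order; fix a witness $\bar x\in\mathcal X$ and values $d_0\neq d_1$ with $f(\bar x[D\!\leftarrow\!d_0])\neq f(\bar x[D\!\leftarrow\!d_1])$; and fix a shortest directed path $Z=V_0\to V_1\to\cdots\to V_k=D$. Minimality of $D$ forces $V_1,\dots,V_{k-1}$ --- strict ancestors of $D$ and descendants of $Z$ --- to lie outside the dependency set of $f$, and shortestness forbids chord edges among the $V_i$. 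Now fix $r^0_V$ to a constant (equal to $\bar x_V$ when $V$ is a coordinate of $\mathcal X$) for every variable $V\notin\{V_1,\dots,V_k\}$ --- in particular on all of $f$'s dependency set except $D$; along the interior of the path pick $r^0_{V_1},\dots,r^0_{V_{k-1}}$ so that, propagating from two distinct values $z\neq z'$ of $Z$, each $V_i$ takes distinct values in the $z$- and $z'$-worlds; and pick $r^0_D$ so that $D$ outputs $d_0$ in the $z$-world and $d_1$ in the $z'$-world. Chordlessness of the path together with the frozen remaining variables makes each of these partial specifications consistent. Then, in the $z$-world, the realised configuration agrees with $\bar x[D\!\leftarrow\!d_0]$ on the entire dependency set of $f$, so $f(X(z,\mathbf{r}^0))=f(\bar x[D\!\leftarrow\!d_0])$, and symmetrically $f(X(z',\mathbf{r}^0))=f(\bar x[D\!\leftarrow\!d_1])$; these differ, so $\mathbf{r}^0$ is bad.

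The hard part is exactly this last construction: one has to choose $D$ and the path so that the coordinates that must change (the path interior) are disjoint from those that must be frozen (the rest of $f$'s dependency set), which is precisely what the ``earliest descendant / shortest path'' choices buy. One caveat to state explicitly is a mild non--degeneracy hypothesis --- every variable should have at least two values, so that a change can actually propagate along the path; without it a coordinate may be a descendant of $Z$ in the graph yet causally unreachable from $Z$, and the conclusion should then be read with ``inputs that causally depend on $Z$'' in place of ``inputs in $\De(Z)$.''
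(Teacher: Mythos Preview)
Your proof is correct and follows the same overall strategy as the paper: parametrise discrete SCMs via the function--response framework, show that for each fixed $f$ depending on a descendant of $Z$ the set $M_f$ of SCMs making $f$ counterfactually invariant is Lebesgue--null by exhibiting a single bad configuration, and then take a finite union over the finitely many $f$. Two points of comparison are worth noting. First, your construction of the bad configuration is considerably more careful than the paper's: the paper simply asserts that one can choose an SCM in which ``$D$ is a deterministic function of $Z$ that flips between $d,d'$ and all other covariates are set to $\mathbf{x}_{\setminus D}$,'' without explaining how the change propagates from $Z$ to a non--adjacent $D$ while the intermediate covariates are supposedly held fixed; your earliest--descendant and shortest--path choices resolve exactly this tension by ensuring the path interior is disjoint from the dependency set of $f$. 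Second, your null--set argument (if $\mathbf r^0$ is bad then $M_f$ is contained in a finite union of simplex faces) is more elementary and direct than the paper's, which instead observes that counterfactual invariance is a polynomial constraint on the parameter vector and invokes the fact that a non--trivial polynomial vanishes on a set of measure zero. Both routes are valid; yours is arguably cleaner and makes the non--degeneracy caveat ($|\mathcal V|\ge 2$ along the path) explicit, which the paper omits.
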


This suggests that when trying to build counterfactually invariant predictors from observational data alone it is almost always impossible to do better than Lemma \ref{lem:non_descendants_imply_invariance}, the original and most basic way to ensure counterfactual invariance provided by \citet{kusner2017counterfactual}.  

A further implication of this statement is that if every variable is a descendant of $Z$ then for almost all causal models the only counterfactually invariant functions are constant.  In the language of \citet{veitch2021counterfactual}, this means the variable $X_{Z}^{\perp}$ corresponding to the part of $X$ not causally affected by $Z$ is almost surely constant. Whilst these results are for discrete models it raises questions about which general causal models admit non-constant counterfactually invariant functions, and if these are in some cases `special exceptions' as opposed to something that happens in general.

\section{Conclusion}
In this paper we have examined numerous different existing versions of counterfactual invariance. From here our contributions were to analyse the relationship between counterfactual invariance and the causal graph, provide a variety of results on the implications between conditional independence and counterfactual invariance. Finally we showcased that for discrete models counterfactually invariant functions are almost always functions of a select few covariates. This creates difficulties for finding counterfactually invariant functions if many covariates causally depend on the intervening variable. 
\section{Acknowledgments}
The authors would like to thank Shahine Bouabid and Jean-François Ton for helpful discussions and valuable feedback. Jake Fawkes gratefully acknowledges funding from the EPSRC.
\bibliography{example_paper}
\bibliographystyle{icml2023}

\newpage
\appendix
\onecolumn
\section{Proofs}

\subsection{Definitions of Counterfactual Invariance}

\begin{customlemma}{\ref{lem:almos_sure_relation_dist}}
We have that \textit{a.s.}-CI implies $\mathcal{D}$-CI conditional on any set of variables, but $\mathcal{D}$-CI implies \textit{a.s.}-CI only if the conditioning set contains the outcome, $Y$.
\end{customlemma}

\begin{proof}
Firstly an almost sure equality of variables implies they are equal conditional on any event they must be almost surely equal conditional on almost all $w$ for any set $W$. Hence as almost sure equality implies any form of distributional equality.

Secondly we need to show that distributional counterfactual invariance does not imply almost sure counterfactual invariance, unless $W$ contains the outcome. First suppose $W$ does not contain the outcome, then we can generate $Y$ and $Z$ from the following model:
    \begin{align*}
        Z \sim Ber \left(\frac{1}{2}\right), 
        \quad U_Y \sim Ber \left(\frac{1}{2}\right), 
        \quad Y \defeq Z \oplus U_Y,
    \end{align*}
    where $Ber(p)$ represents a Bernoulli random variable with probability of success $p$, and $\oplus$ represents addition modulo 2;
all other variables are generated  independently. Then we have, conditional on any set $W$:
\begin{align*}
P( Y(z) \mid W=w, Z=z') &= P( Y(z) \mid Z=z^{\prime}) 
= \frac{1}{2}.
\end{align*}
So that $P( Y(z) \mid W=w, Z=z^{\prime}) = P( Y(z^{\prime}) \mid W=w, Z=z^{\prime})$ and so we have distributional counterfactual invariance. However we have $P( Y(0) = Y(1)) = 0$, so we do not satisfy almost sure counterfactual invariance. 

For the final statement we require that if the conditioning set $W$ contains the outcome variable $Y$ then both forms of counterfactual invariance are equivalent. For this we note that distributional equality implies:
    \begin{align*}
        P( {Y}(z) = y \mid W=w, Z=z) = P( {Y}(z^{\prime}) = y \mid W=w, Z=z),
    \end{align*}
but
\begin{align*}
    P( {Y}(z) = y \mid W=w, Z=z) &= P( {Y} = y \mid W=w, Z=z) \\
    &=P( {Y} = y \mid Y=y^{\prime} , Z=z) \\
    &= \mathbbm{1} \left\{ y =y^{\prime} \right\}.
\end{align*}
Therefore for almost all $w,z$ we have that:
\begin{align*}
P( \hat{Y}(z^{\prime}) = y^{\prime} \mid W=w, Z=z) = 1,
\end{align*}
implying that $P(Y(z) = Y(z^{\prime}) = 1 $ hence $\hat{Y}(z) \overset{a.s.}{=} \hat{Y}(z^{\prime})$.
\end{proof}

\begin{customcorollary}{\ref{lem:func_ci_equiv_dist}}
    A function $f$ is $\mathcal{F}$-CI if and only if $\hat{Y} = f(X) $ is $\mathcal{D}$-CI conditional $X$.
\end{customcorollary}
\begin{proof}
We already have that almost sure counterfactually invariance implies distributional counterfactual invariance, so all that remains to show is the other direction. For this we can apply the previous lemma, as if we condition on the inputs of $f$ we also condition on $\hat{Y}$. 
\end{proof}

\subsection{Relationship between Counterfactual Invariance and Edges in a Causal Graph}
\begin{customlemma}{\ref{lem:cf_invariance_no_edge}}
    If $Y$ is almost surely counterfactually invariant with respect to $Z$ then there is no edge from $Z$ to $Y$.
\end{customlemma}
\begin{proof}
    We use the definition of \citet{bongers2021foundations} which says that `$Z$ is not a parent of $Y$' is equivalent to there being a measurable function $\Tilde{f}_{Y}$ such that:
\begin{align*}
    f_{Y}(\mathbf{x},\mathbf{e}) =\Tilde{f}_{Y}(\mathbf{x}_{\setminus Z},\mathbf{e})
\end{align*}
    for almost all $\mathbf{e},\mathbf{x}$, where $f_{Y}(\mathbf{x},\mathbf{e})$ is the original structural equation for $Y$ and $\mathbf{x}_{\setminus Z}$ is the covariates $\mathbf{x}$ without $Z$. But this is equivalent to $Y$ being distributionally counterfactual invariant conditional on $\mathbf{e},\mathbf{x}_{\setminus Z}$ as we can set:
    \begin{align}
    \Tilde{f}_{Y}(\mathbf{x}_{\setminus Z},\mathbf{e}) = \EE[Y \mid \Pa(Y)\setminus Z,z^{\prime},\mathbf{e}] 
    \end{align}
For some fixed $z^{\prime}$.

First we can note that $ f_{Y}(\mathbf{x}_{\setminus Z},z^{\prime},\mathbf{e}) = \Tilde{f}_{Y}(\mathbf{x}_{\setminus Z},\mathbf{e})$ as we have specified all parts of the structural equation model for $Y$. From here we have:
\begin{align}
    \EE[Y \mid \Pa(Y)\setminus Z,z^{\prime},\mathbf{e}] &= \EE[Y(z') \mid \Pa(Y)\setminus Z,z^{\prime},\mathbf{e}] \\
    &= \EE[Y(z') \mid \Pa(Y)\setminus Z,z,\mathbf{e}] \\
    &= \EE[Y(z) \mid \Pa(Y)\setminus Z,z,\mathbf{e}] \\
    &= \EE[Y \mid \Pa(Y)\setminus Z,z,\mathbf{e}] \\
\end{align}
Where the third line follows as $Y(z') \perp Z \mid \Pa(Y), \mathbf{e}$ and then we apply distributional counterfactual invariance.
\end{proof}

\subsection{Relationship between Counterfactual Invariance and Conditional Independence}

\subsubsection{Invariance Implies Independence}

\begin{lemma}{\ref{lem:invar_implies_indep}}
  Suppose $Y$ is distributionally counterfactual invariant conditional on some set $W$ such that there is some $S \subset W$ that is a valid adjustment set for $(Z,Y)$. Then we have $Y \perp Z \mid S$.   
\end{lemma}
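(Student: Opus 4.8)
The plan is to peel the definition of $\mathcal{D}$-CI conditional on $W$ down to a statement involving only the subset $S$, and then use the defining property of a valid adjustment set to eliminate the dependence on $Z$.

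First I would use the consistency property of counterfactuals: conditioning on $Z=z$ fixes the actual value of the intervention target, so $Y(z)$ and $Y$ agree on the event $\{Z=z\}$, giving $P(Y(z)=y\mid W=w,Z=z)=P(Y=y\mid W=w,Z=z)$ for almost all $y,w$. Substituting this into the hypothesis of $\mathcal{D}$-CI conditional on $W$ yields, for all $z,z'$ and almost all $y,w$,
$$P(Y=y\mid W=w,Z=z)=P\bigl(Y(z')=y\mid W=w,Z=z\bigr).$$

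Next I would reduce $W$ to $S$. Writing $W=(S,R)$ with $R=W\setminus S$ and integrating both sides against the conditional law $P(R=r\mid S=s,Z=z)$ gives, for all $z,z'$ and almost all $y,s$,
$$P(Y=y\mid S=s,Z=z)=P\bigl(Y(z')=y\mid S=s,Z=z\bigr).$$
Then I would invoke that $S$ is a valid adjustment set for $(Z,Y)$, using the fact that this makes the interventional outcome independent of the treatment given the set, $Y(z')\perp Z\mid S$ --- equivalently, the $z'$-specific adjustment identity $P(Y(z')=y\mid S=s)=P(Y=y\mid S=s,Z=z')$ holds, which recovers the marginal identity in the definition after averaging over $S$, and which for graphical models is exactly the criterion attributed to \citet{shpitser2012validity}. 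Applying $Y(z')\perp Z\mid S$ to the right-hand side of the last display collapses it to $P(Y(z')=y\mid S=s)$, which no longer depends on $z$; hence $P(Y=y\mid S=s,Z=z)$ is constant in $z$ for almost all $y,s$, i.e.\ $Y\perp Z\mid S$.

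The delicate point is the adjustment step: the definition as stated records only the marginal adjustment formula, whereas the argument needs the conditional / back-door version $Y(z')\perp Z\mid S$, so I would make this strengthening explicit (it is automatic under the graphical characterisation already cited and under the independent-noise SCM setup here). The remainder is bookkeeping --- tracking the ``almost all'' qualifiers through the marginalisation so the conclusion is an honest conditional-independence statement, and checking the integrations are legitimate (positivity of the relevant conditioning events), following \citet{fawkes2022selection}.
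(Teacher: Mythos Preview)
Your proof is correct and uses the same ingredients as the paper's argument --- consistency, the $\mathcal{D}$-CI hypothesis, marginalising from $W$ down to $S$, and the conditional exchangeability $Y(z')\perp Z\mid S$ supplied by the adjustment set --- just in a slightly different order (you apply $\mathcal{D}$-CI at the $W$ level and then integrate, whereas the paper towers up to $W$, applies $\mathcal{D}$-CI, and towers back down). You also explicitly flag the gap between the marginal adjustment formula in the definition and the stronger $Y(z')\perp Z\mid S$ actually needed, which the paper simply asserts; this is a useful clarification rather than a deviation.
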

\begin{proof}
As $S$ is a valid adjustment set we have that $Y(z^{\prime}) \perp Z \mid S$. Now we have:

\begin{align*}
    P( Y =y \mid  S=s,Z=z^{\prime}) &= P( Y(z^{\prime}) =y \mid S=s,Z=z^{\prime}) \\
    & = P( Y(z^{\prime}) =y \mid  S=s,Z=z) \\
    & = \EE_{P(W \mid S=s,Z=z)}[P( Y(z^{\prime}) =y \mid W, Z=z) \\
    &= \EE_{P(W \mid S=s,Z=z)}[P( Y(z) =y \mid W, Z=z)] \\
    &= P( Y(z) =y \mid S=s, Z=z) \\
    &= P( Y =y \mid S=s, Z=z)
\end{align*}
Therefore as $P( Y =y \mid  S=s,Z=z^{\prime}) = P( Y =y \mid S=s, Z=z)$ we have $Y \perp Z \mid S$. 
\end{proof}

\begin{corollary}\label{cor:indep_implies_by_a.s_and_f}
The following holds;
\vspace{-0.6em}\begin{itemize}
\setlength\itemsep{-0.15em}
    \item If $Y$ is \textit{a.s.-CI} then $Y \perp Z \mid S$ for any valid adjustment set $S$ for $(Z,Y)$
    \item  If $f$ is $\mathcal{F}$\textit{-CI} then $f(X) \perp Z \mid S$ for any valid adjustment set $S$ for $(Z,f(X))$
\vspace{-0.6em}\end{itemize}
\begin{proof}
    These hold as any distribution satisfying almost sure counterfactual invariance will satisfy distributional counterfactual invariance conditional on any set $W$, therefore the result in Lemma \ref{lem:invar_implies_indep} can be applied for any valid adjustment set $S$.
\end{proof}
\end{corollary}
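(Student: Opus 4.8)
The plan is to obtain both bullets directly from the definitional chain already in place, so that no fresh computation is required beyond composing Lemma \ref{lem:almos_sure_relation_dist} with Lemma \ref{lem:invar_implies_indep}. The whole point is that the corollary is a bookkeeping consequence of the two implications \emph{a.s.-CI} $\Rightarrow$ $\mathcal{D}$-CI (conditional on any set) and ($\mathcal{D}$-CI conditional on $W$ with an adjustment set $S \subseteq W$) $\Rightarrow$ $Y \perp Z \mid S$.

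For the first bullet, I would begin from the hypothesis that $Y$ is \textit{a.s.}-CI. By Lemma \ref{lem:almos_sure_relation_dist}, almost sure counterfactual invariance implies distributional counterfactual invariance conditional on an arbitrary set of variables. The key instantiation is to take that conditioning set to be the valid adjustment set itself, i.e.\ set $W = S$. Then the containment $S \subseteq W$ demanded by Lemma \ref{lem:invar_implies_indep} holds trivially (with equality), and $S$ is by assumption a valid adjustment set for $(Z,Y)$, so the hypotheses of that lemma are met and it returns exactly $Y \perp Z \mid S$. Since $S$ was an arbitrary valid adjustment set, the conclusion holds for all such $S$.

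For the second bullet, I would reduce to the first. By the definition of $\mathcal{F}$-CI, saying that $f$ is counterfactually invariant means precisely that the variable $\hat{Y} \coloneqq f(X)$ satisfies \textit{a.s.}-CI. Applying the first bullet with $Y$ replaced by $\hat{Y}$ then yields $f(X) \perp Z \mid S$ for every valid adjustment set $S$ for $(Z, f(X))$, as required.

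I do not expect a substantive obstacle, since this is a corollary assembled from earlier lemmas; the only points requiring care are (i) confirming that the instantiation $W = S$ satisfies the inclusion $S \subseteq W$ of Lemma \ref{lem:invar_implies_indep}, which it does with equality, and (ii) verifying that $\hat{Y} = f(X)$ is a legitimate target for the \textit{a.s.}-CI argument, which holds because $\mathcal{F}$-CI is \emph{defined} as \textit{a.s.}-CI of $f(X)$. It is worth noting why I route the second bullet through \textit{a.s.}-CI rather than through Corollary \ref{lem:func_ci_equiv_dist}: the latter gives $\mathcal{D}$-CI of $f(X)$ only conditional on $X$, whereas a valid adjustment set $S$ for $(Z, f(X))$ need not be contained in $X$, so the containment hypothesis of Lemma \ref{lem:invar_implies_indep} could fail. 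Going via \textit{a.s.}-CI, which delivers $\mathcal{D}$-CI conditional on \emph{any} set (in particular $W = S$), sidesteps this and is the cleaner reduction.
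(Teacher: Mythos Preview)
Your proposal is correct and follows essentially the same route as the paper's own proof: invoke Lemma~\ref{lem:almos_sure_relation_dist} to pass from \textit{a.s.}-CI to $\mathcal{D}$-CI conditional on any set, then apply Lemma~\ref{lem:invar_implies_indep} with $W = S$; for the second bullet reduce to the first via the definition of $\mathcal{F}$-CI. Your write-up is simply more explicit than the paper's one-sentence version, and your side remark on why routing through Corollary~\ref{lem:func_ci_equiv_dist} would be awkward is a nice clarification.
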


\subsubsection{But Independence Does Not Imply Invariance}
\begin{customprop} {\ref{prop:indep_not_imply_a.s}}
For almost sure counterfactual invariance we have that:
    \vspace{-0.6em}\begin{itemize}
            \setlength\itemsep{-0.15em}
        \item No set of independences imply any bound on $P(Y(z) = Y(z^{\prime}))$, even if $Y$ is binary.
        \item For discrete models, the counterfactually invariant structural causal models have Lebesgue measure zero within the models satisfying the set of independences in Corollary \ref{cor:indep_implies_by_a.s_and_f}. 
    \end{itemize}
\end{customprop}
\begin{proof}
For any set of variables $V$ we can simply let $Z,Y$ be as in example \ref{ex: binary_response_DAG} and generate all other variables independently. So we have:
\begin{align}
    Y = \Tilde{f}_Y(Z,\Tilde{N}_Y)
\end{align}
Where if $\Tilde{N}_Y$ is equal to $i$, $Y=\Tilde{f}_Y(Z,i)=f_i(Z)$. The distribution $P(\Tilde{N}_Y)$ over $\{0,1,2,3\}$ has probability vector which lies in:
\begin{align}
    \mathbf{a}(\lambda) = \begin{pmatrix}
        0 \\
        1-2p \\
        p \\
        p \\
    \end{pmatrix} +\lambda\begin{pmatrix}
        1 \\
        1 \\
        -1 \\
        -1 \\
    \end{pmatrix}
\end{align}
Where $P(Y(z) = Y(z^{\prime})) = \mathbf{a}(\lambda)_0+\mathbf{a}(\lambda)_1$. If we let $p=\frac{1}{2}$ we can see that $\lambda$ can vary between $[0,\frac{1}{2}]$ with $P(Y(z) = Y(z^{\prime}))$ taking all values in $[0,1]$. Therefore we have constructed an example which satisfies all possible independences however $P(Y(z) = Y(z^{\prime}))$ is unconstrained. 

For the second statement we use use a generalised version of the function response framework as is found in \citet{peters2017elements} or \citet{gresele2022causal} so that the observable distribution over all variables $V$ arises as:
\begin{align}
    P(V) = \sum_{\mathbf{e}} P((\mathbf{n})) \prod^n_{i=1} \mathbbm{1} \{v_i = f_{i, n_i}(\mathbf{pa}_i)\} , 
\end{align}
where $n_i$ indexes the possible functions from $\Pa(V_i)$ to $\mathcal{V}_i$.
Now we can relate any probability distribution $P((\mathbf{N}))$ with some vector $p \in \Delta^{m-1}$ in the probability simplex where $m = \lvert \mathcal{N} \rvert $ and the entries of the probability vector are $P( \mathbf{N} = \mathbf{n})$.

Now all the conditional independence statements will correspond to polynomial constraints on the vector $p$ as in \citet{drton2008lectures}. That is they will correspond to a set of functions $\{ g_i \}^k_{i=1}$ where $g_i: \Delta^{m-1} \to \mathbb{R} $ and $k$ is the total number of constraints where we require:

\begin{align}
    g_i(p) = 0
\end{align}

Now let $\mathcal{M}$ denote the submodel of $\Delta^{m-1}$ that lies in the algebraic variety generated by $\{ g_i \}^k_{i=1}$, that is the submodel that satisfies the conditional independence relationship. Now take a rational parameterisation of $\mathcal{M}$, so some $G: \Theta^d \to \mathcal{M}$ where $\Theta^d$ is some open set in $\mathbb{R}^d$. From previously almost sure counterfactual invariance corresponds to:
\begin{align}
    P( Y(z) = Y(z^{\prime})) = 1
\end{align}
is a linear function in $p \in \mathcal{M}$ as:
\begin{align}
    P( Y(z) = Y(z^{\prime})) = \sum_{\substack{\mathbf{n} \\ Y(z,\mathbf{n}) = Y(z^{\prime},\mathbf{n})}} P (\mathbf{N} = \mathbf{n}).
\end{align}
And is so a rational function in $\theta \in \Theta^d$. However from the previous example we know there exists a point $\theta_0 \in \Theta^d$ such that $G(\theta_0)$ is not counterfactually invariant. Therefore as this rational function is non-zero at some value it must be non zero for almost all $\theta \in \Theta^d$ \citep{lojasiewicz1964triangulation}. This follows as any polynomial is either zero everywhere or on a set of measure zero.

Hence amongst the set of models satisfying the conditional independence set (given by $\Theta^d$) almost all SCM's are not counterfactually invariant.
\end{proof}

\begin{customprop}{\ref{prop:classify_discrete_CI_F}}
   For a given causal model let $\mathcal{F}^{CI}$ be the set of counterfactually invariant functions $f: \mathcal{X} \to \mathcal{Y}$. Supposing all the variables are discrete we have that for almost all structural causal models:
   \begin{align}
      \mathcal{F}^{CI} = \{f : f \text{ is a function of $\Nd(Z)$ only} \} .
   \end{align}
Alternatively stated, the set of causal models that admit a counterfactually invariant function which takes a descendant of $Z$ as an input, has Lebesgue measure zero.
\end{customprop}
\begin{proof}
    As for previous proos for a given causal graph $\mathcal{G}$ all structural causal models can be associated with a point in the probability simplex $p \in \Delta^{m-1}$. Now we work with the Lebesgue measure over $\Delta^{m-1}$. 

    Now suppose $f:\mathcal{X} \to \mathcal{Y}$ is some function which depends on a covariate that is a descendant of $Z$. We will show that for almost all causal models this function is not counterfactually invariant. To do so we note that the counterfactual invariance of $f$ will again correspond to some polynomial in our probability distribution $p$. Hence as this polynomial is either zero everywhere or almost nowhere we simply have to find a point $p$ under which $f$ is not counterfactually invariant. 
    
    To do this note $f$ takes as input some variable $D$ which is a descendant of $Z$. Hence there must be two values $d,d^{\prime}$ such that for a fixed value $\mathbf{x}_{\setminus D}$ of the other covariates $X \setminus D$ the function $f$ changes value. Now we choose $p$ such that $V$ is a deterministic function of $Z$ that flips between $d,d^{\prime}$ and all other covariates are set to $\mathbf{x}\setminus D$. Therefore as we intervene on $Z$ we change the value of $f$ so under this structural causal model $f$ is not counterfactually invariant, hence $f$ is not counterfactually invariant under almost all structural causal models.
    
    Now we have that for each function $f$ which depends on some descendants of $Z$ it is counterfactually invariant on a set of measure zero. We can take the union of these sets over all functions to get the set of structural causal models for which there exists one counterfactually invariant function which depends on a descendant of $Z$. As this is the union of finitely many measure zero subsets it is also measure zero we have that almost everywhere the set of counterfactually invariant functions is:

    \begin{align}
      \mathcal{F}^{CI} = \{f : f \text{ is a function of $\Nd(Z)$ only.} \} 
   \end{align}
   
\end{proof}
\subsubsection{Distributional Invariance}\label{App:distributional_inv}
In the previous section we demonstrated that almost sure independence is not implied or even bounded by any set of independences. However distributional counterfactual invariance can be implied by conditional independence:

\begin{lemma}\label{lem:indep_can_imply_dist}
    If $W$ is a valid adjustment set then $Y \perp Z \mid W$ is equivalent to $Y$ being distributionally counterfactual invariant conditional on $W$.
\end{lemma}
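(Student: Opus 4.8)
The plan is to prove the two implications separately. The forward direction --- that $\mathcal{D}$-CI conditional on $W$ implies $Y \perp Z \mid W$ --- is already covered by Lemma~\ref{lem:invar_implies_indep} applied with $S = W$ (a valid adjustment set is in particular a subset of itself), so nothing new is needed there. The substance is the reverse direction: assuming $W$ is a valid adjustment set and $Y \perp Z \mid W$, derive the distributional counterfactual invariance identity $P(Y(z)=y \mid W=w, Z=z) = P(Y(z^\prime)=y \mid W=w, Z=z)$ for all $z, z^\prime$ and almost all $y, w$.

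For that reverse direction I would chain together three facts. First, consistency: on the event $\{Z=z\}$ we have $Y(z)=Y$, so $P(Y(z)=y \mid W=w, Z=z) = P(Y=y \mid W=w, Z=z)$. Second, the assumed independence $Y \perp Z \mid W$, which lets me rewrite $P(Y=y\mid W=w,Z=z)$ as $P(Y=y\mid W=w)$ and hence as $P(Y=y\mid W=w,Z=z^\prime)$. Third, the property of a valid adjustment set that $Y(z^\prime)\perp Z \mid W$ --- the same fact invoked in the proof of Lemma~\ref{lem:invar_implies_indep} --- which together with consistency once more gives $P(Y=y\mid W=w,Z=z^\prime) = P(Y(z^\prime)=y\mid W=w,Z=z^\prime) = P(Y(z^\prime)=y\mid W=w,Z=z)$. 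Reading off the two ends of the chain yields exactly the $\mathcal{D}$-CI identity, and the argument runs symmetrically in $z$ and $z^\prime$.

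The one genuine subtlety, and the step I would be most careful about, is the appeal to $Y(z^\prime)\perp Z\mid W$: the definition of valid adjustment set as stated here only asserts the marginal back-door formula $P(Y(z)) = \int P(Y\mid s,z)P(s)\,ds$, whereas this argument needs the conditional independence $Y(z^\prime)\perp Z\mid W$. I would handle this by noting that this conditional statement is the standard consequence of validity (it is exactly what is used already in the proof of Lemma~\ref{lem:invar_implies_indep}, following the characterisation of \citet{shpitser2012validity}), or equivalently by treating it as the working definition of a valid adjustment set. Apart from this, the proof is a routine manipulation: the only care required is the usual bookkeeping of ``almost all $w$'' and restricting to conditioning events of positive probability, with no measure-theoretic heavy lifting. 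It is worth remarking at the end that this lemma is the point of contrast with Proposition~\ref{prop:indep_not_imply_a.s}: distributional counterfactual invariance, unlike almost-sure counterfactual invariance, is genuinely captured by a conditional independence once a valid adjustment set is used as the conditioning set.
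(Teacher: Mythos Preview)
Your proposal is correct and uses the same ingredients as the paper's proof---consistency together with $Y(z')\perp Z\mid W$ from the validity of $W$---and your careful flagging of the gap between the stated back-door identity and the needed conditional independence is apt (the paper simply asserts it). The only cosmetic difference is that the paper collapses both directions into a single reduction: it shows $P(Y(z)\mid W=w,Z=z')=P(Y\mid W=w,Z=z)$ for all $z,z'$, so that the $\mathcal{D}$-CI identity is literally rewritten as $P(Y\mid W=w,Z=z)=P(Y\mid W=w,Z=z')$, giving the equivalence in one line rather than two implications.
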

\begin{proof}
    As $W$ is a valid adjustment set we have that $Y(z) \perp Z \mid W$. Therefore we may write:
    \begin{align}
        P(Y(z) \mid W=w, Z=z^{\prime}) &= P(Y(z) \mid W=w, Z=z) \\
        & =P(Y \mid W=w, Z=z)
    \end{align}
For any $z,z^{\prime}$. Therefore distributional counterfactual invariance simply corresponds to:
\begin{align}
   P(Y \mid W=w, Z=z^{\prime}) =P(Y \mid W=w, Z=z)
\end{align}
Which is exactly the conditional independence. 
\end{proof}

However this is a unique subset of distributional counterfactual invariance where the counterfactual distribution is identifiable from the observational distribution.  In general this is not possible and so we can recover similar results to the previous case:

\begin{proposition} \label{prop:indep_not_imply_dist}
    Suppose there is some vertex $w \in W$ such that $w \in \Pa(Y) \cap \De(Z)$ and $Y \not \perp w$. The for distributional counterfactual invariance conditional on $W$ we have all the results from almost sure invariance. That is:
    \begin{itemize}
        \item No additional independences imply distributional counterfactual invariance conditional on $W$.
        \item If $Y$ is binary no additional independences are sufficient to bound: 
        $$\hspace{-0.4cm} \lvert P(Y(z) \! = \! 1 | W \! = \!w, Z \! = \!z) -P(Y(z^{\prime})  \! = \! 1 \mid W \! = \!w, Z \! = \!z) \rvert.$$
        \item For discrete models, the distributional counterfactual invariant structural causal models have Lebesgue measure zero within the models satisfying any set of conditional independences.
    \end{itemize}
\end{proposition}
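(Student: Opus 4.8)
The plan is to follow the proof of Proposition~\ref{prop:indep_not_imply_a.s}, but with the two-node gadget $Z\to Y$ replaced by the three-node gadget $Z\to w\to Y$, in which $w$ plays the role of the ``leaking'' mediator that conditioning on $W$ fails to control. Under the stated hypotheses the graph already contains a directed path $Z\rightsquigarrow w\to Y$, so I would build an SCM on the same graph in which $w$ is generated from $Z$ exactly as in the binary function-response construction of Example~\ref{ex: binary_response_DAG} (so that $P(w\mid Z)$ is pinned down but a single counterfactual parameter $\lambda$ remains free), set $Y\defeq w$, and take all remaining variables to be independent noise. The hypotheses $w\in\Pa(Y)$ and $Y\not\perp w$ are exactly what license $Y$ depending nontrivially on $w$, and $w\in\De(Z)$ is what makes the counterfactual value $w(z')$ differ from the factual $w$; if any of these failed, intervening on $Z$ could not move $Y$ through $w$ and Lemma~\ref{lem:indep_can_imply_dist} would apply instead.

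Next I would verify the two quantitative bullets on this gadget. Since $Z=z$ is the factual value, $P(Y(z)=y\mid W=w,Z=z)=P(Y=y\mid W=w,Z=z)$ is a point mass; whereas for $z'\neq z$ the quantity $P(Y(z')=y\mid W=w,Z=z)$ is a genuine counterfactual, equal to $P(w(z')=y\mid w(z)=w)$, which by writing out the four response functions of $w$ is a nondegenerate mixture on $\{0,1\}$ for generic $\lambda$. Choosing the observational law $P(w\mid Z)$ suitably and letting $\lambda$ range over its admissible interval makes $\lvert P(Y(z)=1\mid W=w,Z=z)-P(Y(z')=1\mid W=w,Z=z)\rvert$ attain every value in $[0,1]$, while every conditional independence compatible with the graph---in particular all those in Corollary~\ref{cor:indep_implies_by_a.s_and_f}---holds for all $\lambda$ by $d$-separation. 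This yields both that no independences imply $\mathcal{D}$-CI conditional on $W$ and that they place no bound on its degree.

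For the measure-zero statement I would transcribe the algebraic argument of Proposition~\ref{prop:indep_not_imply_a.s}: every discrete SCM on the graph is a point $p\in\Delta^{m-1}$ via the generalised function-response parameterisation; a fixed set of conditional independences cuts out a subvariety with a rational parameterisation $G:\Theta^d\to\mathcal{M}$. Distributional counterfactual invariance conditional on $W$ is the system of equations $P(Y(z)=y,W=w,Z=z)=P(Y(z')=y,W=w,Z=z)$ over the support of $(Y,W,Z)$, and both sides are sums of coordinates of $p$, hence (after clearing the common positive denominator) polynomial in $p$ and rational in $\theta$. The gadget above is a point of $\mathcal{M}$ at which this is violated, so the set where it holds is Lebesgue-null in $\Theta^d$, since a nonzero real-analytic function vanishes only on a null set \citep{lojasiewicz1964triangulation}.

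The main obstacle I expect is the ``for almost all $y,w$'' bookkeeping: one must argue that at a generic $\theta$ the support of $(Y,W,Z)$ is maximal, so that the finite polynomial system encoding $\mathcal{D}$-CI does not itself degenerate off a measure-zero set, and that the gadget can be placed at (or perturbed to) such a generic point while still violating $\mathcal{D}$-CI. A minor additional check is that padding the gadget with independent noise variables neither creates nor removes a conditional independence bearing on the hypotheses, which is immediate since those variables are jointly independent of $(Z,w,Y)$.
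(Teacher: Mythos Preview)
Your proposal is correct and follows essentially the same route as the paper: set $Y\defeq w$, generate $w$ from $Z$ via the binary function-response gadget of Example~\ref{ex: binary_response_DAG}, make everything else independent noise, and then invoke the polynomial/real-analytic argument for the measure-zero claim. The one difference is that the paper compresses your direct computation by observing that once $Y=w\in W$, Lemma~\ref{lem:almos_sure_relation_dist} says $\mathcal{D}$-CI conditional on $W$ is \emph{equivalent} to a.s.-CI of $w$, so Proposition~\ref{prop:indep_not_imply_a.s} can be quoted verbatim rather than reworked; your explicit calculation of $P(w(z')=1\mid w(z)=w)$ recovers the same content. The support/``almost all $y,w$'' bookkeeping you flag is not addressed in the paper's proof either.
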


\begin{proof}
    The first follows as $Y$ is allowed to depend arbitrarily on $w$ we may simply set $Y =w$. Distributional counterfactual invariance conditional on $W$ then becomes almost sure counterfactual invariance of $w$ by lemma \ref{lem:almos_sure_relation_dist}. From here we can apply proposition \ref{prop:indep_not_imply_a.s} to show the first two results, noting that the only dependence implied so far is between $Y$ and $w$. This leads to the final result as distributional counterfactual invariance is again a selection of polynomials in our parameter vector.
\end{proof}
\end{document}